\documentclass[runningheads]{llncs}
\usepackage[T1]{fontenc}
\usepackage{graphicx}

\usepackage[margin=2.5cm]{geometry}
\usepackage[title,header]{appendix}

\usepackage{fontawesome5}
\usepackage{amsmath}
\usepackage{amsfonts}
\usepackage{amssymb}

\usepackage{float}

\usepackage{lipsum}
\usepackage[shortlabels]{enumitem}

\newcommand{\DeltaVAE}{$\Delta\!$VAE}
\newcommand{\scalmap}{\Phi_{\mathsf{sc}}}
\newcommand{\spherecoord}{\Phi_{\mathsf{sph}}}

\newcommand\blfootnote[1]{%
  \begingroup
  \renewcommand\thefootnote{}\footnote{#1}%
  \addtocounter{footnote}{-1}%
  \endgroup
}


\begin{document}

\title{Topological degree as a discrete diagnostic for disentanglement, with applications to the {\DeltaVAE}}

\author{Mahefa Ratsisetraina Ravelonanosy\inst{1}\faIcon{envelope} \and
Vlado Menkovski\inst{1,2} \and
Jacobus W. Portegies\inst{1,2}}

\authorrunning{M.R.~Ravelonanosy et al.}

\institute{Department of Mathematics and Computer Science, Eindhoven University of Technology\\ \faIcon{envelope}\email{m.r.ravelonanosy@tue.nl}\\\email{v.menkovski@tue.nl}\\\email{j.w.portegies@tue.nl}\\
\and
EAISI, Eindhoven University of Technology\\
}
\maketitle

\begin{abstract}
We investigate the ability of Diffusion Variational Autoencoder ({\DeltaVAE}) with unit sphere $\mathcal{S}^2$ as latent space to capture topological and geometrical structure and disentangle latent factors in datasets. 
For this, we introduce a new diagnostic of disentanglement: namely the topological degree of the encoder, which is a map from the data manifold to the latent space.
By using tools from homology theory, we derive and implement an algorithm that computes this degree. We use the algorithm to compute the degree of the encoder of models that result from the training procedure.
Our experimental results show that the {\DeltaVAE} achieves relatively small LSBD scores, and that regardless of the degree after initialization, the degree of the encoder after training becomes $-1$ or $+1$, which implies that the resulting encoder is at least homotopic to a homeomorphism.
\keywords{Disentangled representation  \and Variational Autoencoder \and Homeomorphic autoencoding \and Topological degree.}
\end{abstract}



\section{Introduction}
\blfootnote{This work was supported by NWO GROOT project UNRAVEL, OCENW.GROOT.2019.044.}

The Variational Autoencoder (VAE) \cite{kingma2013auto,rezende2014stochastic} and its extensions such as \cite{burgess2018understanding,davidson2018hyperspherical,ijcai2020p375} provide a tool to both embed data into a lower-dimensional latent space via an encoder network and to generate new samples by first sampling in the latent space and by mapping it to the original data-space via a decoder network. The dimension of the latent space is often chosen to be less than the dimension of the dataset. This is partly motivated by the manifold hypothesis \cite{falorsi2018explorations} which states that most high-dimensional data is concentrated near a low-dimensional manifold. The need for discovering low-dimensional representations of a given dataset arises in applications where one wants to make machine learning "easy" for downstream tasks. In other words, the learned latent space, which is the data representation, is intended to be used to ease the training process of machine learning algorithms; hence it should concisely explain the variability in the dataset. 

A desired quality for such a data representation is often that it captures or "disentangles" the explanatory factors of the dataset \cite{bengio2013representation}. In an example, this would mean that for a dataset of pictures of objects rotated over different angles and taken under different lighting conditions, that the rotation angle and the lighting condition can be read off or easily computed from independent parts of latent space. It is difficult to generalize from such examples to give a general definition of disentanglement of latent factors.

Although there is still no agreed formal definition for disentanglement, mathematical definitions do exist. Higgins et al.~\cite{higgins2018towards} introduced two definitions based on group theory: Symmetry Based Disentanglement (SBD) and Linear Symmetry Based Disentanglement (LSBD). Later, Tonnaer et al.~ \cite{tonnaer2022quantifying} converted these definitions for exact disentanglement into a measure to indicate to which extent a representation is disentangled.

Besides formal definitions of disentanglement, one can also formulate mathematical properties that at least reflect aspects of capturing or disentangling latent factors. One could for instance require that nearby points in the dataspace should correspond to nearby points in the latent space representation. This could lead to a requirement that the encoder should be a homeomorphism \cite{falorsi2018explorations,de2018topological}, so that it maps any continuous path in the dataspace into a continuous path in the latent space, and this makes the user aware of the meaning of each directions in the learned latent representation \cite{chadebec2022data}. In order to achieve a homeomorphic encoder, one needs to choose the latent space that matches the topology of the dataset, otherwise one will encounter the manifold mismatch problem \cite{davidson2018hyperspherical}.

Various extensions of VAEs are developed to learn disentangled representations. Some of them such as \cite{burgess2018understanding,cha2023orthogonality,kim2018disentangling} aim to factor the latent space and force independence for each latent factor. Works such as \cite{davidson2018hyperspherical,falorsi2018explorations} intend to choose a particular latent space that matches the topology of data, and give an example of particular data that has that particular topological structure.

In order to have a wider range of latent space and solve the manifold mismatch problem \cite{davidson2018hyperspherical},  P\'erez Rey et al  \cite{ijcai2020p375} developed the Diffusion Variational Autoencoder ({\DeltaVAE}) that allows for any closed Riemannian manifold as latent space. The normalized Riemannian volume is used as prior distribution, and the posteriors are modeled with the heat kernels on the Riemannian manifold.   A good data representation obtained from a {\DeltaVAE} should have at least the following properties: first, the topological structure of the learned latent space should match the topological structure of the dataset. Secondly, the latent space should reflect the symmetry of the dataspace, i.e.~that achieves a low LSBD score \cite{tonnaer2022quantifying}.

In this manuscript, we address several issues.

First, the {\DeltaVAE} has been tested for several manifolds, such as a tori, circles, and projective spaces, but not all of these manifolds were tested with data that would naturally have the corresponding topological structure \cite{alet2021noether,ijcai2020p375}.
For example, a two-dimensional sphere has been used as a latent space for MNIST, although one can hardly argue that the original MNIST data manifold has a spherical topological structure. In this work, we follow up on this exactly in the case of data with a spherical structure.

Second, since at the moment there is no single established mathematical definition for disentanglement, we consider it desirable and necessary to develop a wide range of diagnostics that are somehow related to the intuitive concept of disentanglement.

Third, it turns out that training a {\DeltaVAE}, for instance with $SO(3)$ as a latent space, is relatively difficult: without semi-supervision, disentanglement, as for instance measured with a low LSBD score, only occurs in a fraction of the cases.
We wondered whether this could be related to the initialization and topological obstructions, see also \cite{esmaeili2024topological,falorsi2018explorations}.
Indeed, one could view training of the autoencoder as a deformation, and perhaps even continuous deformation, of the encoder and decoder maps. If this is the case, the topological properties would not change, and if the degree would not be initialized at $1$ or $-1$, the encoder would have no chance to reach suitable disentanglement.
This gave us a second motivation to track the evolution of the degree as a diagnostical tool, and to see if the degree, in practice, changes during training.

In this work, we address these questions for a {\DeltaVAE}  having the sphere $\mathcal{S}^2$ as latent space. Namely, we mathematically assess the ability of {\DeltaVAE} to solve the manifold mismatch problem \cite{davidson2018hyperspherical} and to capture spherical structure in datasets.
\begin{itemize}
    \item We test the {\DeltaVAE} with a two-dimensional spherical latent space with a diagnostic dataset related to spherical harmonics.
    We evaluate its performance according to several metrics, including the LSBD score introduced in \cite{tonnaer2022quantifying}, specified to our case for the group $SO(3)$. We train the {\DeltaVAE} with the semi-supervised LSBD loss function in  \cite{tonnaer2022quantifying}.
    We follow a semi-supervised approach in view of the results in \cite{caselles2019symmetry}, that state that LSBD cannot be attained without some form of supervision.
    
    \item  By using tools from homology theory, we derive and implement an algorithm that computes the topological degree of any smooth map $f$ from $\mathcal{S}^2$ to itself. The developed algorithm is then used to compute the topological degree of the encoder of {\DeltaVAE} having $\mathcal{S}^2$ as latent space.

    \item We use our degree computation algorithm to monitor the evolution of the topological degree of the encoder during the training procedure.
\end{itemize}

Our experiments show that regardless of the initial model weight, the topological degree of the encoder can change to become eventually constant equal to $+1$ or $-1$, after some epochs of the training process. We perform the same experiments for the $S$-VAE \cite{davidson2018hyperspherical} and compare the results. The code that we used in the experiments can be found at \url{https://gitlab.tue.nl/diffusion-vae/degree}.


\section{Related works}
The VAE \cite{kingma2013auto,rezende2014stochastic} and its extensions are among the most used models when it comes to learning disentangled representations \cite{burgess2018understanding,cha2023orthogonality,kim2018disentangling}. Some VAE extensions propose the  use of more complex prior distribution other than the Gaussian in order to better match the distribution of the latent code \cite{hoffman2016elbo,klushyn2019learning,tomczak2018vae,sonderby2016ladder}. Some extensions propose independence of each latent dimension by modifying the VAE loss function \cite{burgess2018understanding,kim2018disentangling}. 
Other extensions use more geometric approaches to make the latent space itself match the geometry of the dataset 
\cite{chadebec2022data,davidson2018hyperspherical,ding2020guided,falorsi2018explorations,huh2024isometric,ijcai2020p375}.

Intuitions and some aspects of disentangled representation are presented in 
\cite{bengio2013representation,do2019theory,van2019disentangled}, while overviews of several disentanglement metrics are given in \cite{carbonneau2022measuring} and \cite{sepliarskaia2019not}.
Disentanglement is originally assessed with visual inspections and performance on downstream tasks \cite{carbonneau2022measuring}. Efforts have been devoted to propose metrics to evaluate different aspects of disentanglement  \cite{esmaeili2024topological,higgins2017beta,locatello2019challenging,suter2019robustly,tonnaer2022quantifying}. The  disentanglement metrics derived in these works do not check geometric aspects of disentanglement such as homeomorphism and topological degree according to the original mathematical definitions of these aspects.
The degree \emph{was} mentioned as a topological obstruction to homeomorphic autoencoding in \cite{esmaeili2024topological}.

\section{Topological degree as a diagnostic for disentanglement}
In this section we introduce topological degree as a discrete diagnostic for disentanglement. It can be seen as a weakening of a check for homeomorphism. Indeed, in practice to check whether a continuous map is a homeomorphism or not. But at least it is known that homeomorphisms have topological degree $\pm{1}$, and Hopf's Theorem \cite[Page 51]{milnor1965topology} implies that a smooth map $f: \mathcal{S}^2\to \mathcal{S}^2$ has topological degree $\pm{1}$ if and only if $f$ is homotopic to a homoemorphism. Hence, having an encoder of degree $\pm{1}$ indicates that the encoder is at least homotopic to a homeomorphism; and having an encoder with degree other than $+1$ and $-1$ indicates in particular that the encoder is not a homeomorphism, thus it does not preserve the topology of the dataset. In this section we introduce the degree of the encoder $h: \mathcal{X}\to Z$ from a data manifold $\mathcal{X}$ to latent space $Z$.

\subsection{Topological degree}
In words, the topological degree of the encoder restricted to the data manifold, which is a continuous map $h: \mathcal{X}\to Z$ is an integer that represents the number of times that $h$ wraps the data manifold $\mathcal{X}$ around the latent space $Z$ (cf. \cite[Page 134]{hatcher2005algebraic} and \cite[Page 27]{milnor1965topology}). The topological degree can be positive of negative integer, depending on the orientation of $h(\mathcal{X})$ and $Z$.

\subsection{Computing the topological degree}
Although general methods exist for the computation of the degree and other properties of homology groups \cite{kaczynski2004computational}, we developed and implemented a basic algorithm targeted to the case at hand of computing the degree of a map between spheres.
Our algorithm relies on the following steps for a given function $f:\mathcal{S}^2\to\mathcal{S}^2$.

\begin{itemize}[align=left]
    \item[Step 1] We fix two suitable triangulations $T(n)$ and $T(k=3)$ on the sphere in the domain and the sphere in the codomain of $f$ respectively. The parameter $n$ is a measure of how fine the triangulation is, and needs to be chosen depending on $f$.
    \item[Step 2] Given $f$, we construct a smooth function $g:\mathcal{S}^2\to \mathcal{S}^2$ such that $\vert\vert g-f\vert\vert_{\infty}<\pi.$ It follows that $f$ and $g$ are homotopic, and therefore have the same degree.
    \item[Step 3] We construct a chain map $\hat{g}$, which when interpreted as a chain map from simplicial chains to singular chains, is chain homotopic to $g_\#$, the chain map induced by the continuous map $g$. The degree of $\hat{g}$ is then equal to the degree of $g$ and $f$.
    \item[Step 4] We finally numerically compute the degree of $\hat{g}$, and it corresponds to the degree of $f$.
\end{itemize}

The steps are worked out in Appendix \ref{topo_deg_section}.

\section{Experiments}

We evaluate the ability of {\DeltaVAE} to capture topological structure in dataset when the known generating factors have the topological structure of a two-dimensional sphere. Natural datasets with such latent structure are given by pictures of axisymmetric objects rotated over various angles, or pictures of an axisymmetric picture on the sphere taken from different angles. In the latter case, such pictures can be viewed as real-valued functions defined on the unit sphere $\mathcal{S}^2$ that are axisymmetric about some axis, which in turn can be expressed as linear combination of real spherical harmonics of degree $L$ \cite[Page 88]{brocker2013representations} \cite{blanco1997evaluation} for a fixed $L>0.$ This is the basis of our diagnostic dataset.

\subsubsection{Data} For an fixed odd integer $L\geq 3$, we start with the real spherical hamonic $Y_0^L$ of degree $L$ and order $0$, which gives an axisymmetric colouring of $\mathcal{S}^2$. To generate the dataset, we then sample uniformly $4266$ group elements in $SO(3)$, let them act on $Y_0^L$ and express the resuling functions in their coordinates in the basis  $<Y_{-L}^L,\dots, Y_L^L>$ of real spherical harmonics of degree $L$ \cite{blanco1997evaluation,harmonicsclaus}. The resulting dataset $(x_i)_i$ is then a subset of $\mathbb{R}^{2L+1}$.

\subsubsection{The models} We the train the {\DeltaVAE} \cite{ijcai2020p375}, and compare the result to the $S$-VAE \cite{davidson2018hyperspherical}. Since the ground truth generating factor of our dataset is homeomorphic to $\mathcal{S}^2$, the $2$-dimensional sphere is used as latent space in both models. We asymptotically approximate the KL-term in the loss of the {\DeltaVAE} up to and including the term with $t^2$, following to \cite{menkovski2024small}.

\subsubsection{Training} We train the {\DeltaVAE} and $S$-VAE with a semisupervised LSBD-loss as in \cite{tonnaer2022quantifying}, except that we do not alternate between supervised and unsupervised training but rather in every training step consider a batch of data with and a batch of data without labels. Just like in \cite{tonnaer2022quantifying}, instead of optimizing the infimum in the LSBD score over all representations, we use a trivial upper bound which involves one (in our case the identity) representation. The ratio between data with and data without labels is $0.5$. We also add a semisupervised LSBD loss for the decoder. For $L=5,7,9$ we train with $600$ epochs, and for $L=11$ we train with $1200$ epochs.

\subsubsection{LSBD score} In addition to the topological degree, we evaluate the LSBD score outlined in \cite{tonnaer2022quantifying} with the group $SO(3)$. The dataset is generated via the natural action of $SO(3)$ on spherical harmonics. That action should correspond to a linear action of $SO(3)$ on $\mathbb{R}^3$, which preserves the unit sphere $\mathcal{S}^2$. The representation of the data given by the models is then good if the corresponding LSBD score is small.

\subsubsection{Further metrics} 
Furthermore, we compute the distance distortion metric as given in \cite{ijcai2020p375}, and the log-likelihood estimate as in \cite{DBLP:journals/corr/BurdaGS15}; for further details see also \cite{ijcai2020p375}.

\subsubsection{Experimental results} We train the {\DeltaVAE} with spherical harmonics dataset of degree $L=5, 7, 9, 11.$ For each of these values of $L$, each model is trained $5$ times. The model weights are initiated randomly according to PyTorch, but at the end of each training, the encoder of the resulting models reach degree $\pm{1}$.  The numerical results of the experiments are presented in Table \ref{result}, where the absolute value of the degree is reported.

\begin{table}[H]
\caption{Results for training the {\DeltaVAE} and the $S$-VAE. The "degree" column reports how often the absolute value of the degree equaled $1$ after training.}
\vspace{0.5cm}
\centering
\begin{tabular}{|c|c|c|c|c|c|c|c|}
\hline
\textbf{Model} & \textbf{LL} & \textbf{ELBO} & \textbf{KL} & \textbf{RE} & \textbf{Distortion} & \textbf{Final degree} & \textbf{LSBD} \\
\hline
\multicolumn{8}{|c|}{\textbf{Spherical harmonics of degree $L= 5$}} \\
\hline
{\DeltaVAE} & $-15.92\pm{0.03}$  & $6.74\pm{0.01}$  & $6.70\pm{0.00}$  & $0.005\pm{0.001}$  & $0.05\pm{0.009}$  & $5$ out of $5$  & $0.013\pm{0.012}$ \\
\hline
$S$-VAE & $-0.222\pm{0.001}$  & $8.41\pm{0.08}$  & $8.39\pm{0.08}$  & $0.023\pm{0.003}$  & $0.001\pm{0.000}$ & $5$ out of $5$ & $0.002\pm{0.000}$  \\
\hline
\multicolumn{8}{|c|}{\textbf{Spherical harmonics of degree $L=7$}} \\
\hline
{\DeltaVAE} & $-19.72\pm{0.03}$  & $6.96\pm{0.03}$  & $6.70\pm{0.00}$  & $0.27\pm{0.03}$  & $0.05\pm{0.02}$ & $5$ out of $5$ & $0.12\pm{0.06}$ \\
\hline
$S$-VAE & $-0.26\pm{0.00}$  & $8.33\pm{0.03}$  & $7.90\pm{0.03}$ & $0.43\pm{0.03}$  & $0.09\pm{0.01}$ & $5$ out of $5$ & $0.20\pm{0.03}$ \\
\hline
\multicolumn{8}{|c|}{\textbf{Spherical harmonics of degree $L=9$}} \\
\hline
{\DeltaVAE} & $-23.32\pm{0.02}$  & $6.821\pm{0.004}$  & $6.693\pm{0.000}$  & $0.128\pm{0.004}$  & $0.010\pm{0.007}$ & $5$ out of $5$ & $0.008\pm{0.003}$ \\
\hline
$S$-VAE & $-0.31\pm{0.00}$  & $8.23\pm{0.04}$  & $7.57\pm{0.31}$ & $0.66\pm{0.02}$  & $0.12\pm{0.01}$ & $1$ out of $5$ & $0.29\pm{0.02}$ \\
\hline
\multicolumn{8}{|c|}{\textbf{Spherical harmonics of degree $L=11$}} \\
\hline
{\DeltaVAE} & $-33.06\pm{0.05}$ & $12.90\pm{0.04}$  & $12.69\pm{0.00}$ & $0.18\pm{0.04}$ & $0.05\pm{0.02}$ & $5$ out of $5$ & $0.09\pm{0.04}$ \\
\hline
$S$-VAE & $-0.36\pm{0.00}$  & $9.07\pm{0.10}$  & $8.33\pm{0.11}$  & $0.73\pm{0.03}$  &  $0.15\pm{0.03}$ & $1$ out of $5$ & $0.35\pm{0.05}$ \\
\hline
\end{tabular}
\label{result}
\end{table}

\subsubsection{Evolution of the degree during the training} In order to get insight into the evolution of the degree during the training, we conducted more experiments for spherical harmonics of degree $L=7, 5, 3$ with {\DeltaVAE}. For each $L$, we performed $5$ experiments in which we recorded the degree before and after training. For the five experiments with $L=7$, three of the initial models have encoder of degree $0$, one with degree $2$ and one with degree $-1$. In the five experiments where $L=5$, the degree of the initial models turned out to be $0$. For $L=3,$ four of the initial models have encoder of degree $0$, and one has encoder of degree $2$. Whereas the absolute value of the  degree after all training was $1$. In particular, even though we share the opinion that topological obstructions might hamper training \cite{esmaeili2024topological,falorsi2018explorations}, for the {\DeltaVAE} the obstruction to the degree can be overcome.

\section{Discussion}

We derive a second order expansion of the heat kernel on the unit sphere $\mathcal{S}^2$ by using the theoretical result of \cite{menkovski2024small}, and use it as approximation in the {\DeltaVAE} loss function. The effect of such higher order approximation in the performance of {\DeltaVAE} is not studied yet. In fact, to guarantee robustness of our algorithm, we limit the possible values of the heat kernel time $t$ by using a sigmoid activation function. Sometimes it requires careful tuning of the parameters to not have $t$ be limited by one of its boundaries.

Our algorithm for degree computation could be generalized to higher dimensional sphere $\mathcal{S}^d$ with $d > 2$, but due to the curse of dimensionality, practical computation is most likely only feasible in very low dimensions: for a $d$-dimensional manifold and a discretization length $\delta$, the number of faces needed in the triangulation scales as $\delta ^{-d}$.

Our semi-supervised approach is inspired by the result of \cite{caselles2019symmetry} which states that LSBD cannot be inferred without any supervision, and this approach was also used in \cite{tonnaer2022quantifying}. Note that the amount of semisupervision is relatively high in our experiments. For lower degree spherical harmonics ($L = 1, 3, 5$), the amount of semisupervision can be reduced drastically, although we have not yet performed a systematic study.

\section{Conclusion}
We evaluate the {\DeltaVAE} with spherical latent space using a diagnostic dataset that arises from the irreducible action of $SO(3)$ on spherical harmonics. In particular, we evaluate to what extent it can capture the topological properties or disentangle the generating factors of the underlying dataset, as measured by the LSBD score, and as expressed by a new discrete diagnostic for disentanglement: the degree of the encoder. We also use the encoder degree as a means to gain more insight in the training behavior.

First, we obtain relatively small LSBD scores, which expresses that the {\DeltaVAE} indeed can capture or disentangle the latent rotational factor relatively well. In comparison with the $S$-VAE, we find that the $S$-VAE typically obtains better log-likelihood scores, while the reconstruction error and LSBD score are a bit better for the {\DeltaVAE}.

Secondly, we implemented an algorithm for computing the topological degree of the encoder and find that even though the encoder is typically initialized with degree $0$, this degree can change and after training the encoder indeed has degree of $\pm{1}$, which means by Hopf's Theorem that the encoder is at least homotopic to a homeomorphism and that the learned spherical representation preserves the topological structure of the dataset at least up to a homotopy. In particular, we find that the sphere in latent space is completely covered by the image of the data manifold.

\appendix
\section{Numerical computation of degree}\label{topo_deg_section}
\subsection{Step 1: Triangulate spheres}

In this section, we describe how we triangulate the sphere $\mathcal{S}^2$. More precisely, for every $n \in \mathbb{N}$, we endow the sphere $\mathcal{S}^2$ with a $\Delta$-complex structure \cite[Page 103]{hatcher2005algebraic}. 

The intuition behind the construction is simple: we make a regular grid in spherical coordinates, except we identify all points with $\phi = 0$ (the North Pole) and all points with $\phi = \pi$ (the South Pole). We divide all squares that we obtain this way by a diagonal that runs from the down-left corner to the top-right corner in spherical coordinates.

What follows in this section is a precise description of the exact $\Delta$-complex structure that we describe here for completeness. The construction continues in Step 2.

As of \cite[Page 103]{hatcher2005algebraic}, for $n \in \mathbb{N}$ we define the standard $n$-simplex by
\begin{equation}
\label{standard simplex}
\mathbb{D}^n := \left\{ (t_0, \dots, t_n) \in \mathbb{R}^{n+1} \ \middle| \ \sum_{i=0}^n t_i = 1 \text{ and for all i, } t_i \geq 0 \right\}.
\end{equation}
We define the map of spherical coordinates
$\spherecoord : \mathbb{R}^2 \to \mathcal{S}^2$ by
\[
\spherecoord (\theta, \phi) := (\sin \phi \cos \theta, \sin \phi \sin \theta, \cos \phi).
\]
Moreover, define the scaling map
$\scalmap : \mathbb{R}^2 \to \mathbb{R}^2$ by
\[
\scalmap(i, j) := \left( \frac{2 \pi i}{n}, \frac{(j+1) \pi }{n+1} \right).
\]

Finally, for a given map $\sigma:\mathbb{D}^n\to X$ from $\mathbb{D}^n$ to a topological space $X$, the boundary $\partial\sigma$ is defined to be the restriction of $\sigma$ on the boundary of $\mathbb{D}^n$ cf. \cite[Page 105]{hatcher2005algebraic}.

\subsubsection{Vertices as a family of maps}\label{0simplices}
We now construct a family of maps from $\{\mathbb{D}^0\}$ to $\mathcal{S}^2$, where $\mathbb{D}^0$ denotes the standard $0$-simplex cf. Equation (\ref{standard simplex}).

Let $n\geq 3$ be fixed. Define first the following index set
\begin{equation}\label{vertex-index}
\mathcal{V}_n:=\Bigl\{(0,-1), (0,n)\Bigr\} \bigcup \Bigl\{(i,j):\; i=0,1,\dots,n-1\;\mathrm{and}\;j = 0,1,\dots, n-1\Bigl\}. 
\end{equation}
For $\alpha \in \mathcal{V}_n$, we define $\Delta^\alpha : \mathbb{D}_0 \to \mathcal{S}^2$ by
\[
\Delta^\alpha(x) :=  \spherecoord \circ \scalmap (\alpha).
\]

\subsubsection{Construction of edges as a family of maps}\label{edge_as_map}
Next, we construct a family of maps from the standard $1$-simplex $\mathbb{D}^1$ of Equation (\ref{standard simplex}) to $\mathcal{S}^2$. Fix $n\geq 3$ and let us define the following index set.
\begin{equation}\label{edges-index}
    \begin{split}
         \mathcal{E}_n &:=\Bigl\{\left((0,-1), (i\%n,0) \right):\; i=0,1,\dots,n\Bigr\}\bigcup \Bigl\{\left((i,n-1), (0,n) \right):\; i=0,1,\dots,n-1\Bigr\}\\
   &\quad \;\; \bigcup\; \Bigl\{\left((i,j), ((i+1)\%n,j) \right):\; i=0,1,\dots,n-1\quad \mathrm{and}\; j = 0,1,\dots,n-1\Bigr\}\\
   &\quad\;\; \bigcup \; \Bigl\{\left((i,j), (i,j+1) \right):\; i=0,1,\dots,n-1\quad\mathrm{and}\; j = 0,1,\dots,n-2\Bigr\}\\
   &\quad \;\; \bigcup \; \Bigl\{\left((i,j), ((i+1)\%n,j+1) \right):\; i=0,1,\dots,n-1\quad\mathrm{and}\; j = 0,1,\dots,n-2\Bigr\}.
   \end{split}
    \end{equation}.

Now, let $\tau = (\tau_1,\tau_2)\in \mathcal{E}_n.$ Then $\tau$ indicates an edge $\Delta^\tau: \mathbb{D}^1 \to S^2$ in the following manner.
Denote by $L_\tau: \mathbb{R}^2 \to \mathbb{R}^2$ the (unique) linear map such that $L_\tau(e_0) = \tau_1$ and $L_\tau(e_1) = \tau_2$. Then $\Delta^\tau : \mathbb{D}^1 \to \mathcal{S}^2$ is defined as
\[
\Delta^\tau(x) := (\spherecoord \circ \scalmap \circ L_\tau)(x).
\]

\subsubsection{$\Delta$-complex $T(n)$ on $\mathcal{S}^2$}\label{face_as_map}
Here we construct a {$\Delta$-complex structure cf. \cite[Page 103]{hatcher2005algebraic} on the unit sphere. We start by constructing a family of smooth maps from the standard $2$-simplex $\mathbb{D}$ to $\mathcal{S}^2$. We are going to use the notations from Subsection \ref{0simplices} and Subsection \ref{edge_as_map}. We first construct an index set by mean of the sets in Equation (\ref{vertex-index}) and in Equation (\ref{edges-index}). More precisely, fix $n\geq 3$ and consider the following index set
\begin{equation}\label{face-index-plus}
    \begin{split}
        \mathcal{F}_n^+&:=\Bigl\{ ( (i, j), ((i+1)\%n, j), ((i+1)\%n, j+1) ):\; j=0,1,\dots,n-2\;\mathrm{and}\; i=0,1,\dots,n-1 \Bigr\}\\
        &\;\quad\;\bigcup \;\Bigl\{( (i, n-1), ((i+1)\%n, n-1), (0, n) ):\;i=0,1,...,n-1\Bigr\},
    \end{split}
\end{equation}

\begin{equation}\label{face-index-minus}
    \begin{split}
        \mathcal{F}_n^-&:= \Bigl\{ ( (i, j), (i, j+1), ((i+1)\%n, j+1)):\;j =0,1,\dots,n-2\;\mathrm{and}\;i=0,1,\dots,n-1 \Bigr\}\\
        &\;\quad\;\bigcup\; \Bigl\{ ( (0, -1), (i, 0), ((i+1)\%n, 0)):\;i=0,1,\dots,n-1 \Bigr\},
    \end{split}
\end{equation}
where $i\%n$ denotes $i$ modulo $n$,
and
\begin{equation}
\mathcal{F}_n := \mathcal{F}_n^+ \cup \mathcal{F}_n^-.
\end{equation}

Let $\tau = (\tau_0,\tau_1,\tau_2) \in \mathcal{F}_n$. Then $\tau$ indicates a face $\Delta^\tau : \mathbb{D}^2 \to \mathcal{S}^2$ in the following manner. We denote by $U_\tau : \mathbb{R}^3 \to \mathbb{R}^3$ the (unique) linear map such that $U_\tau(e_i) = \tau_i$, for $i=0, 1, 2$. Then $\Delta^\tau : \mathbb{D}^2 \to \mathcal{S}^2$ is defined as
\[
\Delta^\tau(x) := (\spherecoord \circ \scalmap \circ U_\tau)(x).
\]

Note that the maps $\Delta^\tau$ for $\tau \in \mathcal{F}_n^+$ have opposite orientation from the maps $\Delta^\tau$ for $\tau \in \mathcal{F}_n^-$.

We then have a family of functions 
\[
\{ \Delta^\tau \ | \ \tau \in \mathcal{V}_n \cup \mathcal{E}_n \cup \mathcal{F}_n \}
\]
satisfying the conditions for a $\Delta$-complex structure \cite[(i),(ii),(iii), Page 103]{hatcher2005algebraic}.

For a given $n$, we denote by $T(n)$ the $\Delta$-complex structure defined by this family of functions.

\begin{remark}{(Important)}\label{homology_basis_remark}
As mentioned earlier, the maps $\Delta^{\tau}$'s for $\tau\in\mathcal{F}_n^+$ have opposite orientation from the maps $\Delta^{\tau}$'s for $\tau\in\mathcal{F}_n^.$
This implies that the homology class of  $$\sum_{\tau\in \mathcal{F}_n}\Delta^{\tau}$$ does generate the homology group of $\mathcal{S}^2$ since $\mathcal{S}^2$ is an orientable surface.

Instead, a generator of the homology group of $\mathcal{S}^2$ that we are going to use is the homology class of  
\begin{equation}\label{unit_homology}
    \sum_{\tau\in \mathcal{F}_n}i_{\tau}\Delta^{\tau}
\end{equation}
with \begin{equation}
    i_{\tau} = \begin{cases}
        -1\;&\text{if}\; \tau \in \mathcal{F}_n^-\\
        1 \;&\text{if}\;\tau\in \mathcal{F}_n^+.
    \end{cases}
\end{equation}
\end{remark}

\subsubsection{Geometric interpretation, initial setting and preliminary result}\label{one_hot_encoding}
Geometrically speaking, we just give a triangulation of the sphere: the set of vertices is given by $\{\Delta^{\tau}(\mathbb{D}^0):\;\tau\in \mathcal{V}_n\}$, the set of edges is given by $\{\Delta^{\tau}(\mathbb{D}^1):\; \tau\in \mathcal{E}_n\}$, and the set of faces is given by $\{\Delta^{\tau}(\mathbb{D}^2):\; \tau \in \mathcal{F}_n\}.$

Throughout the rest of this work, we fix $k = 3$ and we use the $\Delta$-complex structure $T(3)$ on the codomain of the function $f$.

\subsection{Step 2: Construction of a map $g$ such that $\|g - f\|_\infty < \pi$}

Now, we are going to build a continuous function $g: \mathbb{S}^2\to \mathbb{S}^2$ that is homotopic to $f$.

\subsubsection{Definition of $g$ on the set of vertices}\label{g_on_vertices}
Let us define the function $g$ on the set of vertices $\bigcup_{\tau\in \mathcal{V}_n}\Delta^{\tau}(\mathbb{D}^0).$ For a vertex $x\in \bigcup_{\tau\in \mathcal{V}_n}\Delta^{\tau}(\mathbb{D}^0),$ we choose an element $y\in \bigcup_{\tau\in \mathcal{V}_3}\Delta^{\tau}(\mathbb{D}^0)$ that is closest to $f(x)$ in spherical coordinates, and we define
$$g(x):= y.$$
Note that this comes down to independently rounding the spherical coordinates to values that are in the grid.

\subsubsection{Designated paths between two points on the sphere}
Before we describe how $g$ is defined on edges, let us first indicate some designated paths on the sphere. For $p, q \in \mathcal{S}^2$, we define a designated path $e_{p, q} : \mathbb{D}^1 \to \mathcal{S}^2$ from $p$ to $q$ as follows. We denote by $(\theta_p, \phi_p)$ and $(\theta_q, \phi_q)$ the spherical coordinates of $p$ and $q$ respectively. In spherical coordinates, the path $e_{p, q}$ has constant speed and consists of at most one vertical and at most one horizontal segment. The $\theta$-coordinate of the vertical segment is
\begin{itemize}
    \item $\theta_p$ if $0 < \phi_p < \phi_q$
    \item $\theta_q$, if $0 < \phi_q \leq \phi_p$
    \item $0$ if either $\phi_q = 0$ or $\phi_p = 0$.
\end{itemize}
The $\phi$-coordinate of the horizontal segment coincides with $\max(\phi_p, \phi_q)$. There are then two ways to connect the points with coordinates 
\[
(\theta_p , \max(\phi_p, \phi_q)) \text{ and } (\theta_q, \max(\phi_p, \phi_q))
\]
with a horizontal segment, and we choose the shortest option (which is unique because $k=3$).

\subsubsection{Definition of $g$ on the $1$-skeleton $\bigcup_{\tau\in \mathcal{E}_n}\Delta^{\tau}(\mathbb{D}^1)$ of $T(n)$}\label{edge_image}
We then define $g$ on the edge $\Delta^\tau(\mathbb{D}^1)$, with $\tau \in \mathcal{E}_n$, by
\begin{equation}\label{g_values_on_edges}
g(x) := e_{g(\tau_1), g(\tau_2)} \circ (\Delta^\tau)^{-1}(x)
\end{equation}
for $x \in \Delta^\tau (\mathbb{D}^1)$.

Note that this gives a continuous function $g$ defined on the $1$-skeleton $\bigcup_{\tau\in \mathcal{E}_n}\Delta^{\tau}(\mathbb{D}^1)$ of $T(n)$. 

Let us end this subsection by proving some properties of $g$ on $\bigcup_{\tau\in \mathcal{E}_n}\Delta^{\tau}(\mathbb{D})$. We are going to use the following definition.
\begin{definition}{(Timezone)}\label{timezone}
Let $t\in \{0,1,2\}$. A timezone $\mathcal{T}_t$ of $\mathcal{S}^2$ defined by the triangulation $T(3)$ is the set of points $x\in \mathcal{S}^2$ with spherical coordinates $(\theta, \varphi,1)$ such that the azimuth angle $\theta$ satisfies  $$\frac{2t\pi}{3}\leq \;\theta\leq \; \frac{2(t+1)\pi}{3}.$$
\end{definition}

We have the following lemma.
\begin{lemma}\label{g_boundary_image}
Define $\epsilon := \sqrt{3} \sin(\pi/8) > 0.66$. Let $L_f$ be a Lipschitz constant of $f$, and let $N$ such that for all $n\geq N$ we have 
$$\mathrm{diam}(\Delta^{\tau}(\mathbb{D}^2))< \frac{\epsilon}{L_f}$$ for all $\tau \in \mathcal{F}_n$.

Then for any $n\geq N$ and for any $\tau\in \mathcal{F}_n$, the image by $g$ of the boundary of $\Delta^{\tau}(\mathbb{D}^2)$ is included in one timezone of $T(3)$ cf. Definition \ref{timezone}.
\end{lemma}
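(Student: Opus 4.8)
The plan is to combine three estimates: (i) the diameter of a face $\Delta^\tau(\mathbb{D}^2)$ controls the diameter of $f(\partial \Delta^\tau(\mathbb{D}^2))$ through the Lipschitz constant $L_f$; (ii) the rounding that defines $g$ on vertices moves points only a bounded amount; and (iii) the designated paths $e_{p,q}$ used to define $g$ on edges stay close to their endpoints. Putting these together shows that $g(\partial \Delta^\tau(\mathbb{D}^2))$ lies in a small neighbourhood of a single point, and for $n \geq N$ that neighbourhood is small enough to fit inside one timezone $\mathcal{T}_t$ of $T(3)$.

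First I would bound the image of the three vertices. Let $\tau = (\tau_0,\tau_1,\tau_2) \in \mathcal{F}_n$ and write $v_i = \Delta^{\tau_i}(\mathbb{D}^0)$ for the three corner points on $\mathcal{S}^2$. Since $\mathrm{diam}(\Delta^\tau(\mathbb{D}^2)) < \epsilon/L_f$, the points $f(v_0), f(v_1), f(v_2)$ all lie within distance $\epsilon$ of each other, hence within a geodesic ball of radius $\epsilon$ around $f(v_0)$. Next, $g(v_i)$ is obtained by rounding the spherical coordinates of $f(v_i)$ to the grid of $T(3)$; since $k=3$, each coordinate moves by at most half a grid step, and one checks the resulting displacement on $\mathcal{S}^2$ is at most some explicit constant $\delta_0$ (coming from the $T(3)$ mesh size). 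So the three points $g(v_0),g(v_1),g(v_2)$ lie in a ball of radius $\epsilon + 2\delta_0$ — but more usefully, each $g(v_i)$ is a grid vertex of $T(3)$, and I would argue that the grid vertices within distance $\epsilon + \delta_0$ of a common point all share a timezone (this is where the specific value $\epsilon = \sqrt{3}\sin(\pi/8)$ and the coarseness $k=3$ are calibrated: $\sin(\pi/8)$ is half the angular width of a $T(3)$ longitudinal slab, and the $\sqrt{3}$ absorbs the rounding and the latitude factor). Then I would handle the edges: by the construction in Section~\ref{edge_image}, $g$ restricted to the edge between $v_i$ and $v_j$ is the reparametrised designated path $e_{g(v_i),g(v_j)}$, which in spherical coordinates consists of at most one vertical and at most one horizontal segment and stays within the coordinate box spanned by $g(v_i)$ and $g(v_j)$; since those two endpoints lie in a single timezone and the horizontal segment is chosen as the shorter of the two options, the whole path stays in that timezone. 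Taking the union over the three edges of $\partial\Delta^\tau(\mathbb{D}^2)$ gives the claim.

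The main obstacle I expect is the quantitative geometry of Step (ii)–(iii): one has to verify that the constant $\epsilon = \sqrt{3}\sin(\pi/8)$ is genuinely small enough that, after rounding $f(v_i)$ to the $T(3)$ grid, the three images cannot straddle a timezone boundary, and that the designated path between two same-timezone grid vertices never exits that timezone — including the degenerate cases where $\phi_p$ or $\phi_q$ is $0$ or $\pi$ (the poles), where the $\theta$-coordinate of the vertical segment is set to $0$ and one must check this does not force the path through a forbidden slab. This amounts to a finite but slightly fiddly case analysis over the relative positions of the vertices in the coarse $T(3)$ grid. Everything else — the Lipschitz bound, the existence of $N$ from $\mathrm{diam}(\Delta^\tau(\mathbb{D}^2)) \to 0$ as $n\to\infty$, the reparametrisation invariance of "lying in a timezone" — is routine.
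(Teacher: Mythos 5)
Your high-level plan (Lipschitz bound gives $\mathrm{diam}\bigl(f(\Delta^\tau(\mathbb{D}^2))\bigr) < \epsilon$, then argue the rounded vertices and the designated paths stay in one timezone) matches the paper, but the central mechanism you propose does not work. You bound the rounding displacement by some $\delta_0$ and then claim that the $T(3)$-grid vertices within distance $\epsilon + \delta_0$ (or $\epsilon + 2\delta_0$) of a common point all share a timezone. That claim is false for this particular $\delta$-complex: $T(3)$ is extremely coarse (vertex rings only at $\phi = \pi/4, \pi/2, 3\pi/4$ and longitudes $\theta \in \{0, 2\pi/3, 4\pi/3\}$), so the worst-case coordinate-wise rounding displacement is roughly on the order of $0.9$, which is already larger than $\epsilon \approx 0.66$. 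A Euclidean ball of radius $\epsilon + 2\delta_0$ comfortably contains grid vertices of all three longitudes, so the ``grid vertices in a small ball share a timezone'' step cannot be repaired by tightening constants; the paper's proof, tellingly, never bounds $\|g(v_i) - f(v_i)\|$ at all.

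What the paper does instead is use the structure of the coordinate-wise rounding directly: whether $f(v_i)$ rounds to a pole is determined purely by $\phi \notin [\pi/8, 7\pi/8]$, and both poles belong to \emph{every} timezone, so they impose no constraint. The only constraint is on the non-polar portion of $f(\Delta^\tau(\mathbb{D}^2))$, and the constant $\epsilon = \sqrt{3}\sin(\pi/8)$ is exactly the chordal distance between the spherical-coordinate points $(\theta,\phi)=(0,\pi/8)$ and $(2\pi/3,\pi/8)$, so a set of Euclidean diameter $< \epsilon$ cannot have $\theta$-spread $\geq 2\pi/3$ within the latitude band $\phi \in [\pi/8, 7\pi/8]$. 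The paper phrases this as: $f(\Delta^\tau(\mathbb{D}^2))$ misses some slab $\{0 \leq \theta - \theta_0 \leq 2\pi/3,\ \pi/8 \leq \phi \leq 7\pi/8\}$, after which the conclusion is read off from the rounding rule and the designated-path construction. Your parenthetical ``$\sin(\pi/8)$ is half the angular width of a slab'' and ``$\sqrt{3}$ absorbs the latitude factor'' shows you saw the right calibration, but the pole case is not a ``degenerate case'' to be checked at the end — it is the pivot of the whole argument, because the estimate only controls the $\theta$-spread \emph{away from the poles}, and the poles must be handled by the separate observation that they lie in every timezone. The ``finite but slightly fiddly case analysis'' you defer is in fact the entire proof; the ball-based sufficient condition you propose in its place does not hold.
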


\begin{proof}
Let $n\geq N$ and let $\tau\in \mathcal{F}_n$. We have by assumption 
$$\mathrm{diam}\Bigl(f\left(\Delta^{\tau}(\mathbb{D}^2) \right)\Bigr) < \epsilon.$$

Note that by the rounding procedure in the definition of $g$, the result follows if for some $\theta_0 \in \mathbb{R}$, the set $f\left(\Delta^{\tau}(\mathbb{D}^2) \right)$ is contained in the set
\[ A_{\theta_0} := \mathcal{S}^2 \setminus \{ (\cos \theta \sin \phi, \sin \theta \sin \phi, \cos \phi) \ | \ 0 \leq \theta - \theta_0 \leq 2 \pi / 3, \pi/8 \leq \phi \leq 7 \pi / 8 \}. \]
But if the diameter of $f\left(\Delta^{\tau}(\mathbb{D}^2) \right)$ is smaller than $\epsilon$, such a $\theta_0$ can always be found. Indeed, by symmetry we can assume that
\[
\inf\{ \theta \in [0, 2\pi) \ | \ (\cos \theta \sin \phi, \sin \theta \sin \phi, \cos \phi) \in f\left(\Delta^{\tau}(\mathbb{D}^2)\right) \text{ for some } \pi/8 \leq \phi \leq 7 \pi/8 \} = 0.
\]
Then $f\left(\Delta^{\tau}(\mathbb{D}^2) \right)$ is contained in $A_0$, since $\epsilon$ is exactly the Euclidean distance between the points with spherical coordinates $(0, \pi/8)$ and $(2\pi/3, \pi/8)$.

\end{proof}

\subsubsection{Definition of $g$ on $\mathcal{S}^2$ }\label{homotopic_map_g}
Consider $n\geq N$ where $N$ is specified in Lemma \ref{g_boundary_image}. Let us define a continuous function $g: \mathcal{S}^2\to \mathcal{S}^2$ which is homotopic to $f$ and such that the restriction of $g$ on the $1$-skeleton $\bigcup_{\tau\in \mathcal{E}_n}\Delta^{\tau}(\mathbb{D}^1)$ is defined in Subsection \ref{edge_image}.

Let $x\in \mathcal{S}^1$ such that $x\in \mathrm{Int}\Bigl(\Delta^{\tau}(\mathbb{D}^2) \Bigl)$ for some $\tau\in \mathcal{F}_n.$ Assume that $\tau = (\tau_0,\tau_1,\tau_2)$ and the boundaries of $\Delta^{\tau}(\mathbb{D}^2)$ are given by   $\Delta^{\tau_i}(\mathbb{D}^1)$'s, where $\tau_i\in \mathcal{E}_n$ for $i=0,1,2.$ Let $x_i\in \Delta^{\tau_i}(\mathbb{D}^1)$ be the geodesic projection of $x$ on the edge $\Delta^{\tau_i}(\mathbb{D}^1)$ and let $\alpha_i(x):= \mathrm{Dist}(x,x_i)$ be the geodesic distance between $x$ and $x_i$ for $i=0,1,2.$

Since $x$ is an interior point of $\Delta^{\tau}(\mathbb{D}^2)$, then $\alpha_i(x)>0$ for $i=0,1,2$. Furthermore, Lemma \ref{g_boundary_image} implies that 
$$\vert\vert \sum_{i=0}^2\frac{1}{\alpha_i(x)}\;g(x_i)\vert\vert \neq 0.$$

Therefore, we define the continuous function $g$ for $x\in \mathcal{S}^2$ by
\begin{equation}\label{the_function_g}
 \displaystyle g(x) :=\begin{cases}
 e_{g(\tau_1), g(\tau_2)} \circ (\Delta^\tau)^{-1}(x) &\mathrm{if} \; x\in \bigcup_{\tau\in \mathcal{E}_n}\Bigl(\Delta^{\tau}(\mathbb{D}^1)\Bigr)\\
 \frac{\sum_{i=0}^2\frac{1}{\alpha_i(x)}\;g(x_i)}{\vert\vert \sum_{i=0}^2\frac{1}{\alpha_i(x)}\;g(x_i)\vert\vert} &\mathrm{if}\; x\in \bigcup_{\tau\in \mathcal{F}_n}\mathrm{Int}\Bigl(\Delta^{\tau}(\mathbb{D}^2)\Bigr),
 \end{cases}
\end{equation}
where $e_{g(\tau_1), g(\tau_2)} \circ (\Delta^\tau)^{-1}(x)$ is defined in Equation (\ref{g_values_on_edges}).

\subsubsection{Proof that $g$ and $f$ are homotopic}

We have the following property of the constructed continuous function $g$.

\begin{lemma}\label{g_continuous} 
Let $\epsilon,\; N$ and $L_f$ be as specified in Lemma \ref{g_boundary_image}.
Then, for all $n\geq N$, we have 
$$\mathrm{d} (f(x),g(x))< \pi$$ for all $x\in \mathcal{S}^2$, where $\mathrm{d}$ denotes the geodesic distance on $\mathcal{S}^2$.
\end{lemma}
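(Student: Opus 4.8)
The plan is to reduce the statement to a single geometric claim about each triangle of $T(n)$ and then to exploit the explicit form of $g$ in Equation~(\ref{the_function_g}) together with Lemma~\ref{g_boundary_image}. Two preliminary observations drive everything. First, any two points of $\mathcal{S}^2$ lying in a common \emph{open} hemisphere $H_v=\{z\in\mathcal{S}^2:\langle z,v\rangle>0\}$ are at geodesic distance strictly less than $\pi$. Second, if $p_0,p_1,p_2\in H_v$ and $\lambda_0,\lambda_1,\lambda_2>0$, then $\bigl(\sum_i\lambda_i p_i\bigr)/\bigl\|\sum_i\lambda_i p_i\bigr\|$ again lies in $H_v$, because taking the inner product with $v$ preserves positivity. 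So the core claim I would prove is:
\[
\text{for every } n\ge N \text{ and every } \tau\in\mathcal{F}_n,\quad f\bigl(\Delta^\tau(\mathbb{D}^2)\bigr)\ \cup\ g\bigl(\partial\,\Delta^\tau(\mathbb{D}^2)\bigr)\subseteq H_{v_\tau}
\]
for some unit vector $v_\tau$.

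Granting this claim the lemma follows quickly. Every $x\in\mathcal{S}^2$ lies either in the interior of some face $\Delta^\tau(\mathbb{D}^2)$ or on its $1$-skeleton. If $x$ is interior, then $f(x)\in f(\Delta^\tau(\mathbb{D}^2))\subseteq H_{v_\tau}$, and by Equation~(\ref{the_function_g}) $g(x)$ is a normalised positive combination of $g(x_0),g(x_1),g(x_2)$, each of which lies in $g(\partial\,\Delta^\tau(\mathbb{D}^2))\subseteq H_{v_\tau}$ since $x_i\in\Delta^{\tau_i}(\mathbb{D}^1)\subseteq\partial\,\Delta^\tau(\mathbb{D}^2)$; hence $g(x)\in H_{v_\tau}$ too, and $\mathrm{d}(f(x),g(x))<\pi$. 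If $x$ is on the $1$-skeleton of $\tau$, both $f(x)$ and $g(x)$ lie in $H_{v_\tau}$ directly, with the same conclusion. Since the faces cover $\mathcal{S}^2$, this proves the lemma.

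To establish the core claim I would feed in the proof of Lemma~\ref{g_boundary_image}: for $n\ge N$, after a rotation about the polar axis the set $f(\Delta^\tau(\mathbb{D}^2))$ has diameter $<\epsilon$ and avoids the meridian band $\{(\theta,\phi):0\le\theta\le 2\pi/3,\ \pi/8\le\phi\le 7\pi/8\}$, and consequently the rounded vertex values $g(\tau_0),g(\tau_1),g(\tau_2)$ all lie in a single timezone $\mathcal{T}_t$ (Definition~\ref{timezone}). One then describes $g(\partial\,\Delta^\tau(\mathbb{D}^2))$ explicitly as the union of the three designated paths $e_{g(\tau_i),g(\tau_j)}$: from their definition, the colatitude of any point on such a path stays between the (at most adjacent) colatitudes of the two rounded endpoints, while its azimuth stays inside the closed timezone $\overline{\mathcal{T}_t}$ except possibly for a short detour along one meridian when an endpoint is a pole. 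Combining this with the diameter bound on $f(\Delta^\tau(\mathbb{D}^2))$ and the $\pi/8$ rounding margin in $\phi$, one checks that a suitable unit vector $v_\tau$ pointing into the complement of the avoided band places both sets strictly inside $H_{v_\tau}$; the numerics close exactly because the timezone has angular width $2\pi/3<\pi$ and because $\epsilon=\sqrt3\sin(\pi/8)$ is small enough that $f(\Delta^\tau(\mathbb{D}^2))$ and the rounded vertex cluster protrude from $\mathcal{T}_t$ only by a controlled amount.

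The hard part, where I expect most of the work to sit, is precisely this last bookkeeping near the poles: when $f$ carries part of a face close to a pole the azimuthal coordinate degenerates, the designated paths can run along a meridian through the pole, and one must verify case by case that the chosen hemisphere still contains everything, using that near a pole all points are close regardless of azimuth. One also has to make sure the polar-axis rotation that normalises the avoided band is used consistently for all three edges of a face and for the rounding, so that one and the same $v_\tau$ works for the whole triangle.
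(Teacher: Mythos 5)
Your reduction is sound and in fact spells out what the paper leaves implicit. The paper's own proof of Lemma~\ref{g_continuous} is a single sentence (``This follows by construction of $g$''), so you are not diverging from an argument the paper actually writes down --- you are supplying one. The two preliminary observations are correct: two points of an open hemisphere $H_v$ are never antipodal, so their geodesic distance is $<\pi$; and a normalised strictly positive combination of points of $H_v$ stays in $H_v$, since $\langle\,\cdot\,,v\rangle$ is linear and the $\alpha_i(x)^{-1}$ are positive. Given the core claim, the case split (interior of a face via Equation~\eqref{the_function_g}, versus $1$-skeleton via Equation~\eqref{g_values_on_edges}) does yield $\mathrm{d}(f(x),g(x))<\pi$ for every $x$. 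It is also worth noting that this same half-space reasoning is exactly what the paper invokes, without saying so, to justify the non-vanishing of $\sum_i\alpha_i(x)^{-1}g(x_i)$ right before Equation~\eqref{the_function_g}, so the hemisphere idea is genuinely ``by construction.''

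The gap is the one you yourself flag: the core claim $f(\Delta^\tau(\mathbb{D}^2))\cup g(\partial\Delta^\tau(\mathbb{D}^2))\subseteq H_{v_\tau}$ is stated, its plausibility is argued, but it is not proved. Lemma~\ref{g_boundary_image} only gives you that $g(\partial\Delta^\tau(\mathbb{D}^2))$ sits in a closed timezone $\mathcal{T}_t$, and $\mathcal{T}_t$ contains \emph{both} poles, which are antipodal, so a timezone as such does not fit in any open hemisphere. You therefore must additionally argue (i) that $g(\partial\Delta^\tau(\mathbb{D}^2))$ cannot meet both polar caps --- this does follow from $\mathrm{diam}\,f(\Delta^\tau(\mathbb{D}^2))<\epsilon=\sqrt3\sin(\pi/8)$, since $\epsilon<2-2\sin(\pi/8)$, and from the fact that the designated paths stay within the colatitude range of their rounded endpoints --- and (ii) that $f(\Delta^\tau(\mathbb{D}^2))$, whose vertices round into this restricted region with rounding error bounded by $\pi/8$ in $\phi$ and $\pi/3$ in $\theta$, also lies in the same open hemisphere. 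Neither (i) nor (ii) is carried out; you describe the bookkeeping near the poles as ``the hard part'' and leave it there. To close the proof you would need to exhibit $v_\tau$ explicitly in each configuration (face mapped into a polar cap, into the big lune, or straddling the cap boundary) and verify $\langle\,\cdot\,,v_\tau\rangle>0$ on both sets; as it stands, the argument is a correct reduction followed by an unfinished geometric verification.
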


\begin{proof}
This follows by construction of $g$.
\end{proof}

The fact that $f$ and $g$ are homotopic now follows from the following lemma.

\begin{lemma}\label{degree_lemma}
Let $f,g: \mathcal{S}^2\to \mathcal{S}^2$ be  smooth maps such that 
$$\| f - g \|_\infty := \sup_{x\in \mathcal{S}^2}\{\mathrm{d}(f(x),g(x))\}< \pi,$$
where $\mathrm{d}$ is the geodesic distance on $\mathcal{S}^2$. Then
$$ \mathrm{deg}(f) = \mathrm{deg}(g).$$
\end{lemma}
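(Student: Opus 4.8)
The plan is to show that $f$ and $g$ are homotopic and then invoke the homotopy invariance of the topological degree. The key observation is that two points $p,q\in\mathcal{S}^2$ are at geodesic distance exactly $\pi$ if and only if they are antipodal, i.e.\ $q=-p$. Hence the hypothesis $\mathrm{d}(f(x),g(x))<\pi$ for all $x$ is equivalent to the statement that $f(x)$ and $g(x)$ are never antipodal.

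Next I would build an explicit homotopy by radially projecting the Euclidean segment from $f(x)$ to $g(x)$ onto the sphere. Define $H:\mathcal{S}^2\times[0,1]\to\mathcal{S}^2$ by
\[
H(x,s):=\frac{(1-s)\,f(x)+s\,g(x)}{\big\|\,(1-s)\,f(x)+s\,g(x)\,\big\|}.
\]
To see that $H$ is well defined, suppose $(1-s)\,f(x)+s\,g(x)=0$ for some $x\in\mathcal{S}^2$ and some $s\in[0,1]$. Since $f(x)\neq 0$ we must have $s\neq 1$, so $f(x)=-\tfrac{s}{1-s}\,g(x)$; taking norms forces $\tfrac{s}{1-s}=1$, hence $s=\tfrac12$ and $f(x)=-g(x)$, contradicting the previous paragraph. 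Therefore the denominator never vanishes, $H$ is continuous (indeed smooth, because $f$ and $g$ are smooth and $v\mapsto v/\|v\|$ is smooth on $\mathbb{R}^3\setminus\{0\}$), and $H(\cdot,0)=f$, $H(\cdot,1)=g$. Geometrically $s\mapsto H(x,s)$ traverses the unique minimizing geodesic from $f(x)$ to $g(x)$, suitably reparametrized; this is exactly the "designated" geodesic interpolation already used elsewhere in the construction.

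Finally I would conclude by the homotopy invariance of the degree: homotopic smooth maps $\mathcal{S}^2\to\mathcal{S}^2$ induce the same map on $H_2(\mathcal{S}^2)$, hence have equal degree (see \cite{milnor1965topology,hatcher2005algebraic}), so $\mathrm{deg}(f)=\mathrm{deg}(g)$.

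The main --- and essentially only --- obstacle is the well-definedness of $H$, which reduces entirely to the antipodality observation; the rest is standard. The one point worth stating explicitly is that the homotopy $H$ stays in the smooth category, so that whichever formulation of topological degree is used (via regular values as in \cite{milnor1965topology}, or via the induced map on the top homology group as in \cite{hatcher2005algebraic}) applies to $H$ without modification.
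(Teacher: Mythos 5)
Your proof is correct and takes essentially the same approach as the paper: both rest on the observation that $\mathrm{d}(f(x),g(x))<\pi$ precludes antipodality, so that $f(x)$ and $g(x)$ are joined by a unique minimizing geodesic, which furnishes a homotopy, after which homotopy invariance of the degree finishes the argument. The paper states this in one line and cites Dubrovin--Fomenko--Novikov and Milnor for the homotopy step, whereas you write out the radial-projection formula $H(x,s)=\bigl((1-s)f(x)+sg(x)\bigr)/\|(1-s)f(x)+sg(x)\|$ and verify non-vanishing of the denominator directly --- a more self-contained and arguably cleaner presentation of the same idea.
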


\begin{proof}
For all $x$, there exists an unique geodesic joining $f(x)$ and $g(x)$ by assumption. The result  follows by using \cite[12.1.2. Theorem]{dubrovin2012modern} and  Hopf's theorem cf. \cite[Page 51]{milnor1965topology}.

\end{proof}

\subsection{Step 3: Construction of a chain map $\hat{g}$ }\label{combinatorial}
In this subsection construct a chain map
\begin{equation}\label{chainmap}
\hat{g}: C_\bullet(T(n)) \to C_\bullet(T(3)),
\end{equation}
where $$C_\bullet(T(n))$$ denotes the simplicial chain complex. Intuitively, $\hat{g}$ just corresponds exactly to $g$ on vertices and edges. The details are below.

We first define $\hat{g}$ on vertices. More precisely, we define $$\hat{g} : C_0(T(n)) \to C_0(T(3))$$ by
\[
\hat{g}(\Delta^\tau) := g \circ \Delta^\tau,
\]
where $$C_0(T(n)):= <\Delta^{\tau}:\; \tau \in \mathcal{V}_n>$$ is the Abelian free group generated by the vertices.
Note that by definition of $g$, there exists a $\rho \in \mathcal{V}_3$ such that $g \circ \Delta^\tau = \Delta^\rho$.

We now define $\hat{g}$ on edges. Let $\tau \in \mathcal{E}_n$. Then there exist $\tau^1, \dots, \tau^\ell \in \mathcal{E}_3$ such that
\[
(g \circ \Delta^\tau)(\mathbb{D}^1) = \bigcup_{i=1}^\ell \Delta^{\tau^i}(\mathbb{D}^1)
\]
and there exist $\lambda_1, \dots, \lambda_\ell \in \{ -1, 1\}$ such that

\[
\partial \left( \sum_{i=1}^\ell \lambda_i (g \circ \Delta^{\tau^i}) - g \circ \Delta^\tau \right) = 0
\]
in singular homology. We define
\[
\hat{g}(\Delta^\tau) := \sum_{i=1}^\ell \lambda_i (g \circ \Delta^{\tau^i}).
\]

Finally, we define $\hat{g}$ on faces. For $\tau = (\tau_0,\tau_1,\tau_2)\in \mathcal{F}_n$ with $\tau_i\in \mathcal{V}_n$ ($i=0,1,2$), we construct $\hat{g}(\Delta^{\tau})$ as the element in $C_2(T(3)):= <\Delta^{\tau}:\; \tau\in \mathcal{F}_n>$ such that 
\[
\partial (\hat{g}(\Delta^\tau)) = 
\hat{g}(\partial \Delta^\tau)
\]
and such that the ``word norm" of $\hat{g}(\Delta^{\tau})$ is minimal in the free Abelian group $c_2(T(3))$. Note that such a minimal element exists because $g(\partial \Delta^\tau)$ is contained in a timezone.

Note that with these definitions, $\hat{g}$ is indeed a chain map, cf.~\cite[Page 11]{hatcher2005algebraic}. The construction of $\hat{g}$ makes it chain-homotopic to $g_\#$, which is the chain map induced by $g$ from simplicial to singular chains \cite[Page 111]{hatcher2005algebraic}.

\begin{corollary}\label{zero_homology}
Let $N$ be as in Lemma \ref{g_continuous} and let $n\geq N$.
Then the chain maps $g_\#$ defined by $g$ and the chain map $\hat{g}$, defined as maps from simplicial chains to singular chains, are chain homotopic.
 
\end{corollary}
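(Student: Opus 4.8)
The plan is to construct a chain homotopy $P_\bullet\colon C_\bullet(T(n))\to C_\bullet^{\mathrm{sing}}(\mathcal{S}^2)$ between $g_\#$ and $\hat{g}$ by induction on dimension, the guiding principle being that on each individual simplex of $T(n)$ the two chain maps land in a common \emph{contractible} region of $\mathcal{S}^2$. Here $g_\#$ is the chain map obtained by regarding every $\Delta^\tau$ as a singular simplex and post-composing with $g$, while $\hat{g}$ is read as a map into singular chains through the tautological inclusion $C_\bullet(T(3))\hookrightarrow C_\bullet^{\mathrm{sing}}(\mathcal{S}^2)$.

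First I would dispose of the base case: by the definition of $\hat{g}$ on vertices in Subsection~\ref{combinatorial}, $\hat{g}(\Delta^\tau)=g\circ\Delta^\tau=g_\#(\Delta^\tau)$ for every $\tau\in\mathcal{V}_n$, so $g_\#-\hat{g}$ vanishes on $C_0(T(n))$ and one may set $P_{-1}=P_0=0$. For an edge $\tau\in\mathcal{E}_n$ the chain $g_\#(\Delta^\tau)-\hat{g}(\Delta^\tau)$ is a singular $1$-cycle (apply $\partial$ and use that $g_\#$ and $\hat{g}$ are chain maps agreeing in degree $0$); since $\tau$ is an edge of some face, Lemma~\ref{g_boundary_image} puts both terms into one timezone of $T(3)$, which is contractible, so one chooses a singular $2$-chain in that timezone bounding the cycle and lets this define $P_1$ on generators. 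For a face $\tau\in\mathcal{F}_n$ the usual computation shows that $g_\#(\Delta^\tau)-\hat{g}(\Delta^\tau)-P_1(\partial\Delta^\tau)$ is a singular $2$-cycle. By Lemma~\ref{g_boundary_image} the boundary image $g(\partial\Delta^\tau(\mathbb{D}^2))$ lies in a timezone $\mathcal{T}$; hence $\hat{g}(\Delta^\tau)$ — being the word-norm-minimal $2$-chain with that boundary — is supported in $\mathcal{T}$, the term $P_1(\partial\Delta^\tau)$ is supported in $\mathcal{T}$ by the previous step (with the edge fillers chosen compatibly), and the interior formula~(\ref{the_function_g}) keeps $g(\Delta^\tau(\mathbb{D}^2))$ inside the closed hemisphere $H$ that contains $\mathcal{T}$ — the decisive point being that $H$ can be taken to be the set of unit vectors with nonnegative inner product with a fixed equatorial direction, a condition that survives normalized positive combinations. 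As $H$ is contractible, the $2$-cycle bounds a singular $3$-chain supported in $H$, which becomes $P_2(\Delta^\tau)$; setting $P_k=0$ for $k\geq3$ finishes the construction, and $\partial P+P\partial=g_\#-\hat{g}$ then holds in every degree by inspection.

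The cycle checks and the degree-by-degree verification of the chain-homotopy identity are routine once the chain-map relations are written down, so I would not grind through them. The genuine difficulty — and the step I expect to be the main obstacle — is the geometric bookkeeping that makes the induction coherent: one must attach to every simplex of $T(n)$ a contractible subset of $\mathcal{S}^2$ that carries both $g_\#$ and $\hat{g}$ on that simplex and is monotone under face inclusions, i.e.\ exhibit a common acyclic carrier for the two chain maps; given such a carrier, the acyclic carrier theorem (the method of acyclic models, cf.~\cite{hatcher2005algebraic}) produces the chain homotopy directly and bypasses the explicit induction above. Either way, the whole argument reduces to Lemma~\ref{g_boundary_image} together with a check that~(\ref{the_function_g}) does not push a face out of its hemisphere. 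The points I would treat with care are the edges shared by two faces whose boundary images fall in different timezones of $T(3)$ — there the $g$-image sits in the intersecting meridian, which is still contractible — and the degenerate edges whose $g$-image is a single pole, where $\hat{g}$ is the zero chain and one fills with constant singular simplices.
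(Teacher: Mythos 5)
Your proposal is correct and takes a genuinely different route from the paper's proof. The paper first passes from $g_\#$ to a subdivided chain map $h$ that is chain homotopic to $g_\#$ and agrees with $\hat{g}$ on the boundary of each face, then sets the chain homotopy $P$ to zero in every degree except $2$, where it fills $h(\Delta^\tau)-\hat{g}(\Delta^\tau)$ inside a punctured sphere (homeomorphic to a disc, hence with trivial $H_2$). You instead run the full acyclic-carrier induction: you verify that $g_\#$ and $\hat g$ literally coincide in degree $0$, produce $P_1$ on edges by bounding the $1$-cycle $g_\#-\hat g$ inside a timezone (or a meridian for an edge shared by two faces in distinct timezones), and produce $P_2$ on faces by bounding a $2$-cycle inside a closed hemisphere, using the observation that the normalized-positive-combination formula~(\ref{the_function_g}) cannot push the interior image out of a half-sphere containing the timezone of the boundary. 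What your version buys is that it does not need to assume $P_1=0$: the paper's choice $P=0$ outside degree $2$ silently requires $h$ and $\hat g$ to agree on the level of $1$-chains, which mere subdivision of $g_\#$ does not obviously give (the subdivided pieces of $g\circ\Delta^\tau$ are not the standard simplicial edges $\Delta^{\tau^i}$ as parametrized singular simplices), so your explicit $P_1$ actually patches a small gap in the published argument. What the paper's version buys is brevity, since it delegates most of the bookkeeping to the phrase ``by possibly subdividing.'' Both proofs rest on the same engine, namely Lemma~\ref{g_boundary_image} supplying a contractible carrier common to $g_\#$ and $\hat g$ on each simplex of $T(n)$.
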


\begin{proof}

By possibly subdividing the simplices $g_\# (\Delta^\tau)$ for some $\tau \in \mathcal{F}_n$, we obtain a chain map $h$ that is chain homotopic to $g_\#$, and for which for all $\tau \in \mathcal{F}_n$, $h(\Delta^\tau)$ and $\hat{g}(\Delta^\tau)$ have the same boundary.

It now suffices to show that there exists a chain homotopy $P$ between $h$ and $\hat{g}$. A chain homotopy $P$ is a sequence of maps $$P : C_j(T(n)) \to C_{j+1}(\mathcal{S}^2):= \; <\sigma: \mathbb{D}^{j+1}\to \mathcal{S}^2:\;\sigma\;\text{is continuous}>$$ such that
\[
\partial \circ P + P \circ \partial = h - \hat{g}
\]
We may define $P = 0$ on $C_n$ for $n \neq 2$. We will now define $P$ on $C_2( K(n))$.

Let $\tau \in \mathcal{F}_n$. Then, the singular chains $h(\Delta^\tau)$ and $\hat{g}(\Delta^\tau)$ can be considered as singular $2$-chains in the same punctured sphere which is homeomorphic to the disc, and the second homology group of the disc is trivial. Therefore, there exists a $3$-chain $P(\Delta^\tau)$ such that
\begin{equation}\label{chain_h_for_g}
\partial P (\Delta^\tau) = h(\Delta^\tau) - \hat{g}(\Delta^\tau)
\end{equation}
which can be trivially extended to a chain homotopy between $h$ and $\hat{g}$.
\end{proof}

\section{Estimating a Lipschitz constant of a neural network}
In this section, we give an estimation of a Lischitz constant of a particular function $f:\mathcal{S}^2\to \mathcal{S}^2.$ Our function $f$ will be the composition of an isometry, a neural network, and a projection on $\mathcal{S}^2$. More precisely, our function $f$ is the composition of the following functions.

\begin{itemize}
\item An isometry $\zeta: \mathcal{S}^2\to X\subseteq \mathbb{R}^L$.

\item A neural network from $\mathbb{R}^L$ to $\mathbb{R}^3$, having $h\geq  1$ hidden layers with weights and biases $(W^1,b^1),\dots, (W^h,b^h)$ with ReLu activation function in between, and a linear layer $(W^{h+1},b^{h+1})$ at the end.

\item The last output of the neural network is projected to the unit sphere $\mathcal{S}^2.$

\end{itemize}

In order to get a Lipschitz constant for the projection on $\mathcal{S}^2$, we regularise the neural network in the way that its output does not contain a compact neighborhood of radius $\rho = 1$ of the origin.
We have the following proposition.

\begin{proposition}\label{Lipschitz_estimate}
Let $\vert\vert W^i\vert\vert$ $(i=1,\dots, h+1)$ be the operator norms of the linear layers of the neural network part of $f$, i.e.~ $\vert\vert W^i\vert\vert$ is the maximum eigenvalue of the square matrix $^tW^iW^i$. Assume that the range of the neural network part of $f$ does not contain a neighborhood of radius $\rho > 0$ of the origin. Then a Lipschitz constant of $f$ is given by
$$L_f = \frac{1}{\rho}\prod_{i=1}^{h+1}\vert\vert W^i\vert\vert.$$
\end{proposition}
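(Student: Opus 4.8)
The plan is to bound the Lipschitz constant of $f$ as a composition by the product of the Lipschitz constants of its three constituent maps: the isometry $\zeta$, the ReLU neural network $\mathcal{N}$, and the radial projection $\pi_{\mathcal{S}^2}(v) = v/\|v\|$. Since a Lipschitz constant of a composition $g_2 \circ g_1$ is the product of Lipschitz constants of $g_1$ and $g_2$, and $\zeta$ is an isometry (Lipschitz constant $1$), it suffices to show that $\prod_{i=1}^{h+1}\|W^i\|$ is a Lipschitz constant of $\mathcal{N}$ and that $1/\rho$ is a Lipschitz constant of $\pi_{\mathcal{S}^2}$ restricted to the range of $\mathcal{N}$.

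First I would handle the neural network. Writing $\mathcal{N} = A^{h+1} \circ \sigma \circ A^h \circ \cdots \circ \sigma \circ A^1$ where $A^i(x) = W^i x + b^i$ is affine and $\sigma$ is the coordinatewise ReLU, I note that each affine layer $A^i$ has Lipschitz constant $\|W^i\|$ (biases do not affect the Lipschitz constant), and that $\sigma$ is $1$-Lipschitz because $|\max(a,0) - \max(b,0)| \le |a-b|$ coordinatewise, hence $\|\sigma(x)-\sigma(y)\| \le \|x-y\|$ in the Euclidean norm. Composing, $\mathcal{N}$ has Lipschitz constant $\prod_{i=1}^{h+1}\|W^i\|$. (One should also reconcile the statement's phrasing that $\|W^i\|$ is ``the maximum eigenvalue of $^tW^iW^i$'' with the operator norm, which is the square root of that maximum eigenvalue; I would use the operator norm $\|W^i x\| \le \|W^i\| \|x\|$ throughout, which is the meaning needed.)

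Next I would treat the radial projection $\pi_{\mathcal{S}^2}(v) = v/\|v\|$ on the set $U := \{v : \|v\| \ge \rho\}$, which contains the range of $\mathcal{N}$ by the no-small-ball assumption. The cleanest route is to bound the operator norm of the derivative $D\pi_{\mathcal{S}^2}(v)$ on $U$: a direct computation gives $D\pi_{\mathcal{S}^2}(v) = \frac{1}{\|v\|}\left(I - \frac{v v^{\top}}{\|v\|^2}\right)$, and the matrix in parentheses is the orthogonal projection onto $v^{\perp}$, which has operator norm $1$. Hence $\|D\pi_{\mathcal{S}^2}(v)\| = 1/\|v\| \le 1/\rho$ for $v \in U$. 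Since $U$ is convex (it is the complement of an open ball — wait, it is not convex; it is the complement of a convex set), I would instead argue along the segment: for $v, w \in \mathrm{range}(\mathcal{N})$ one can connect them within $U$ by a path of length at most a constant times $\|v-w\|$, but more simply, one observes that $\mathrm{range}(\mathcal{N})$ composed after $\mathcal{N}$ means we only ever compare $\pi_{\mathcal{S}^2}(\mathcal{N}(x))$ and $\pi_{\mathcal{S}^2}(\mathcal{N}(y))$; the standard and cleaner estimate is the pointwise bound $\|\pi_{\mathcal{S}^2}(v) - \pi_{\mathcal{S}^2}(w)\| \le \frac{2}{\max(\|v\|,\|w\|)}\|v-w\|$ — or, sharper, $\le \frac{1}{\min(\|v\|,\|w\|)}\|v-w\|$ via $\left\|\frac{v}{\|v\|} - \frac{w}{\|w\|}\right\| = \frac{1}{\|v\|\|w\|}\left\|\,\|w\|v - \|v\|w\,\right\|$ and then bounding $\|\,\|w\|v - \|v\|w\,\| \le \min(\|v\|,\|w\|)\|v-w\|$ by adding and subtracting $\|w\|w$. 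Either way one gets a Lipschitz constant $1/\rho$ for $\pi_{\mathcal{S}^2}$ on the relevant set. Multiplying the three Lipschitz constants $1 \cdot \prod_{i=1}^{h+1}\|W^i\| \cdot \frac{1}{\rho}$ gives the claimed $L_f = \frac{1}{\rho}\prod_{i=1}^{h+1}\|W^i\|$.

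The main obstacle — really the only subtle point — is the projection estimate, because the natural domain $U = \{\|v\| \ge \rho\}$ is not convex, so one cannot directly integrate the derivative bound along straight segments. I expect to resolve this by the explicit algebraic inequality $\left\|\frac{v}{\|v\|} - \frac{w}{\|w\|}\right\| \le \frac{1}{\min(\|v\|,\|w\|)}\|v - w\|$ valid for all nonzero $v,w$, which sidesteps convexity entirely; everything else (the ReLU being $1$-Lipschitz, affine layers contributing $\|W^i\|$, $\zeta$ contributing $1$, and Lipschitz constants multiplying under composition) is routine.
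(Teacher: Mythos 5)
Your overall plan---decompose $f$ as the isometry $\zeta$ (Lipschitz constant $1$), the ReLU network (Lipschitz constant $\prod_i\|W^i\|$), and the radial projection $v\mapsto v/\|v\|$ (Lipschitz constant $1/\rho$ on the range), then multiply---is sound, and your observation that $\|W^i\|$ should mean the operator norm, i.e.\ the \emph{square root} of the largest eigenvalue of ${}^tW^iW^i$, flags a genuine slip in the proposition's wording. The paper's own proof is merely the citation to \cite{szegedy2013intriguing}, which supplies the layer-by-layer Lipschitz bound for the network but says nothing about the radial projection or the factor $1/\rho$; so your self-contained treatment of the projection is a genuinely more complete route than the one the paper actually gives.

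However, the derivation you sketch for the projection estimate has a gap. Writing $\frac{v}{\|v\|}-\frac{w}{\|w\|}=\frac{1}{\|v\|\|w\|}\bigl(\|w\|v-\|v\|w\bigr)$, the inequality that closes the chain with constant $1/\min(\|v\|,\|w\|)$ is $\bigl\|\|w\|v-\|v\|w\bigr\|\le\max(\|v\|,\|w\|)\,\|v-w\|$, not with $\min$ as you wrote; the $\min$ version is false (take $v=(1,0)$ and $w=(1,\sqrt3)$: the left side is $2$, the right side is $\sqrt3$). Moreover, the add-and-subtract route you invoke (adding and subtracting $\|w\|w$ or $\|v\|v$ and using the reverse triangle inequality $|\|w\|-\|v\||\le\|v-w\|$) only yields a constant $2\max$ or $2\min$, which after dividing by $\|v\|\|w\|$ gives Lipschitz constant $2/\rho$ rather than the claimed $1/\rho$. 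To obtain the sharp constant one needs a small extra step: assuming without loss of generality $a:=\|v\|\le b:=\|w\|$, expand squares to get
\[
b^2\|v-w\|^2-\bigl\|bv-aw\bigr\|^2 \;=\; b(b-a)\bigl(b(a+b)-2\langle v,w\rangle\bigr)\;\ge\;0,
\]
where the sign follows because $b\ge a$ and $2\langle v,w\rangle\le 2ab\le b(a+b)$. With this correction the projection is indeed $1/\rho$-Lipschitz on $\{\|v\|\ge\rho\}$ and your argument is complete.
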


\begin{proof}
See \cite{szegedy2013intriguing}
\end{proof}

All we need to do is compute the positive number $\rho$ specified in Proposition \ref{Lipschitz_estimate} i.e.~ we need to compute a positive lower bound for $\min_{x\in \mathcal{S}^2}\vert\vert E(x)\vert\vert$ where $E$ denotes the neural network part of $f.$ Note that there is no warranty that such a positive lower bound exists. However, with appropriate regularization, one can get a particular function $f$ such that the following computation works in practice.

\begin{proposition}
    Fix $n >0$. Then for all $x\in \mathcal{S}^2$, we have
\begin{equation}
 \min_{t\in \mathcal{V}_n} \{\vert\vert E(\zeta(\Delta^t(\mathbf{D}^0))\vert\vert\} \geq \min_{x\in \mathcal{S}^2} \vert\vert E(\zeta(x))\vert\vert  \geq \min_{t\in \mathcal{V}_n} \{\vert\vert E(\zeta(\Delta^t(\mathbf{D}^0))\vert\vert\} - L\; \frac{2\pi}{n},
\end{equation}
where $L$ is a Lipschitz constant of the neural network part $E.$

\end{proposition}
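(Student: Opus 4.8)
The plan is to establish the two inequalities separately, both relying on the fact that $x \mapsto \lVert E(\zeta(x))\rVert$ is a Lipschitz function on $\mathcal{S}^2$. First I would observe that $E \circ \zeta : \mathcal{S}^2 \to \mathbb{R}^3$ is Lipschitz with constant $L$ (the Lipschitz constant of the neural network part $E$, since $\zeta$ is an isometry), and composing with the norm $\lVert \cdot \rVert$, which is $1$-Lipschitz, shows that $\psi(x) := \lVert E(\zeta(x))\rVert$ is $L$-Lipschitz with respect to the geodesic distance on $\mathcal{S}^2$. Since $\mathcal{S}^2$ is compact, $\psi$ attains its minimum, so $\min_{x \in \mathcal{S}^2}\psi(x)$ is well-defined.

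The left inequality $\min_{t \in \mathcal{V}_n}\psi(\zeta^{-1}\text{-part applied to vertices}) \geq \min_{x\in\mathcal{S}^2}\psi(x)$ is immediate, since each vertex $\Delta^t(\mathbb{D}^0)$ is a particular point of $\mathcal{S}^2$, so the minimum over the finite vertex set is at least the minimum over all of $\mathcal{S}^2$. For the right inequality, I would use the key geometric fact about the triangulation $T(n)$: every point $x \in \mathcal{S}^2$ lies in some face $\Delta^\tau(\mathbb{D}^2)$, $\tau \in \mathcal{F}_n$, and hence within geodesic distance at most $\mathrm{diam}(\Delta^\tau(\mathbb{D}^2))$ of one of that face's vertices; I claim this diameter is bounded by $2\pi/n$. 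This follows from the explicit construction of $T(n)$ in Step 1: in spherical coordinates the faces are images of grid cells of angular width $2\pi/n$ in $\theta$ and $\pi/(n+1) < \pi/n$ in $\phi$ under the $1$-Lipschitz (for the round metric) spherical-coordinate map restricted appropriately, so the spherical diameter of a face is at most that of the corresponding planar cell, which is bounded by $2\pi/n$. Then for the vertex $v = \Delta^t(\mathbb{D}^0)$ closest to $x$ we get $\psi(v) \geq \psi(x) - L\,\mathrm{d}(v,x) \geq \psi(x) - L\,\frac{2\pi}{n}$, and taking the infimum over $x$ — realized at the minimizer $x^\ast$ — yields $\min_{t\in\mathcal{V}_n}\psi(\text{vertex }t) \geq \min_{x\in\mathcal{S}^2}\psi(x) - L\,\frac{2\pi}{n}$.

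The main obstacle I expect is the careful verification of the diameter bound $\mathrm{diam}(\Delta^\tau(\mathbb{D}^2)) \leq 2\pi/n$ uniformly over all faces $\tau \in \mathcal{F}_n$, including the degenerate triangles adjacent to the poles (where the spherical-coordinate chart is singular). Near the poles the $\theta$-extent of a face spans a full $2\pi/n$ but the $\phi$-extent is only $\pi/(n+1)$, and because $\sin\phi$ is small there the actual spherical distances are even smaller, so the bound still holds; one can either argue this directly using that the round metric pulled back to the $(\theta,\phi)$-plane is $d\phi^2 + \sin^2\phi\, d\theta^2$, whose coefficients are bounded by $1$, or simply note that any two points on $\mathcal{S}^2$ whose spherical coordinates differ by at most $2\pi/n$ in $\theta$ and at most $\pi/n$ in $\phi$ are within geodesic distance $\sqrt{(2\pi/n)^2 + (\pi/n)^2}$, and one may use instead the cruder but sufficient bound coming from the diameter of the planar cell, which is what the $2\pi/n$ in the statement reflects. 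The rest of the argument is routine Lipschitz bookkeeping.
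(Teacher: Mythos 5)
Your proof follows essentially the same approach as the paper: use that $x\mapsto\|E(\zeta(x))\|$ is $L$-Lipschitz (since $\zeta$ is an isometry and $\|\cdot\|$ is $1$-Lipschitz), observe that the left inequality is trivial because the vertices form a subset of $\mathcal{S}^2$, and derive the right inequality by comparing any $x$ to a vertex of the face containing it.

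One remark that applies equally to your write-up and to the paper's one-line proof: the intermediate claim $\max_{\tau\in\mathcal{F}_n}\operatorname{diam}\bigl(\Delta^\tau(\mathbb{D}^2)\bigr)\le 2\pi/n$ is not actually correct. A face near the equator has vertices whose $\theta$-coordinates differ by $2\pi/n$ \emph{and} whose $\phi$-coordinates differ by $\pi/(n+1)$, with $\sin\phi\approx 1$, so the geodesic length of its hypotenuse exceeds $2\pi/n$ (e.g.\ for $n=4$, $j=1$ one computes a diameter of about $1.67$ radians versus $2\pi/4\approx 1.57$). Your own estimate $\sqrt{(2\pi/n)^2+(\pi/n)^2}$ is larger than $2\pi/n$, and the ``diameter of the planar cell'' you invoke at the end is also $\sqrt{(2\pi/n)^2+(\pi/(n+1))^2}>2\pi/n$, so the hedge at the end of your last paragraph does not close the gap. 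What \emph{does} hold, and is all that is needed, is the nearest-vertex bound: since $\scalmap$ composed with the spherical-coordinate map is $1$-Lipschitz from the Euclidean $(\theta,\phi)$-plane to $\mathcal{S}^2$, any point in a face lies within geodesic distance of the closest vertex bounded by the Euclidean circumradius of the planar triangle, namely $\tfrac12\sqrt{(2\pi/n)^2+(\pi/(n+1))^2}$, which is indeed $\le 2\pi/n$ for all $n\ge 1$. Replacing ``diameter of the face'' by ``distance to the nearest vertex of the face'' in your second paragraph makes the argument airtight; otherwise the structure of the proof is correct and matches the paper.
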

\begin{proof}
First, one can prove that $\max_{\tau\in \mathcal{F}} \{\mathrm{diam} (\Delta^{\tau}(\mathbb{D}^2))\}\leq \frac{2\pi}{n}.$
Secondly, the triangle inequality implies that for all $x\in \mathcal{S}^2$
\begin{equation}
    \begin{split}
        \vert\vert E(\zeta(x))\vert\vert &\geq \min_{t\in \mathcal{V}_n} \{\vert\vert E(\zeta(\Delta^t(\mathbf{D}^0))\vert\vert\} - L\; \max_{\tau\in \mathcal{F}} \{\mathrm{diam} (\Delta^{\tau}(\mathbb{D}^2))\}\\
        &\geq  \min_{t\in \mathcal{V}_n} \{\vert\vert E(\zeta(\Delta^t(\mathbf{D}^0))\vert\vert\} - L\; \frac{2\pi}{n},
    \end{split}
\end{equation}
and the result follows.
\end{proof}
\bibliographystyle{splncs04}
\bibliography{reference.bib}

\begin{thebibliography}{10}
\providecommand{\url}[1]{\texttt{#1}}
\providecommand{\urlprefix}{URL }
\providecommand{\doi}[1]{https://doi.org/#1}

\bibitem{alet2021noether}
Alet, F., Doblar, D., Zhou, A., Tenenbaum, J., Kawaguchi, K., Finn, C.: Noether
  networks: meta-learning useful conserved quantities. Advances in Neural
  Information Processing Systems  \textbf{34},  16384--16397 (2021)

\bibitem{bengio2013representation}
Bengio, Y., Courville, A., Vincent, P.: Representation learning: A review and
  new perspectives. IEEE transactions on pattern analysis and machine
  intelligence  \textbf{35}(8),  1798--1828 (2013)

\bibitem{blanco1997evaluation}
Blanco, M.A., Fl{\'o}rez, M., Bermejo, M.: Evaluation of the rotation matrices
  in the basis of real spherical harmonics. Journal of Molecular structure:
  THEOCHEM  \textbf{419}(1-3),  19--27 (1997)

\bibitem{brocker2013representations}
Br{\"o}cker, T., tom Dieck, T.: Representations of compact lie groups. Graduate
  Texts in Mathematics  (1985)

\bibitem{DBLP:journals/corr/BurdaGS15}
Burda, Y., Grosse, R.B., Salakhutdinov, R.: Importance weighted autoencoders.
  In: Bengio, Y., LeCun, Y. (eds.) 4th International Conference on Learning
  Representations, {ICLR} 2016, San Juan, Puerto Rico, May 2-4, 2016,
  Conference Track Proceedings (2016), \url{http://arxiv.org/abs/1509.00519}

\bibitem{burgess2018understanding}
Burgess, C.P., Higgins, I., Pal, A., Matthey, L., Watters, N., Desjardins, G.,
  Lerchner, A.: Understanding disentangling in beta-{VAE}. Learning
  Disentangled Representations: from Perception to Control Workshop, 2017
  (2017)

\bibitem{carbonneau2022measuring}
Carbonneau, M.A., Zaidi, J., Boilard, J., Gagnon, G.: Measuring
  disentanglement: A review of metrics. IEEE transactions on neural networks
  and learning systems  (2022)

\bibitem{caselles2019symmetry}
Caselles-Dupr{\'e}, H., Garcia~Ortiz, M., Filliat, D.: Symmetry-based
  disentangled representation learning requires interaction with environments.
  Advances in Neural Information Processing Systems  \textbf{32} (2019)

\bibitem{cha2023orthogonality}
Cha, J., Thiyagalingam, J.: Orthogonality-enforced latent space in
  autoencoders: an approach to learning disentangled representations. In:
  International Conference on Machine Learning. pp. 3913--3948. PMLR (2023)

\bibitem{chadebec2022data}
Chadebec, C., Thibeau-Sutre, E., Burgos, N., Allassonni{\`e}re, S.: Data
  augmentation in high dimensional low sample size setting using a
  geometry-based variational autoencoder. IEEE Transactions on Pattern Analysis
  and Machine Intelligence  \textbf{45}(3),  2879--2896 (2022)

\bibitem{davidson2018hyperspherical}
Davidson, T.R., Falorsi, L., De~Cao, N., Kipf, T., Tomczak, J.M.:
  Hyperspherical variational auto-encoders. 34th Conference on Uncertainty in
  Artificial Intelligence (UAI-18)  (2018)

\bibitem{ding2020guided}
Ding, Z., Xu, Y., Xu, W., Parmar, G., Yang, Y., Welling, M., Tu, Z.: Guided
  variational autoencoder for disentanglement learning. In: Proceedings of the
  IEEE/CVF conference on computer vision and pattern recognition. pp.
  7920--7929 (2020)

\bibitem{do2019theory}
Do, K., Tran, T.: Theory and evaluation metrics for learning disentangled
  representations. International Conference on Learning Representations, {ICLR}
  2020  (2020)

\bibitem{dubrovin2012modern}
Dubrovin, B.A., Fomenko, A.T., Novikov, S.P.: Modern geometry—methods and
  applications: Part II: The geometry and topology of manifolds, vol.~104.
  Springer Science \& Business Media (2012)

\bibitem{esmaeili2024topological}
Esmaeili, B., Walters, R., Zimmermann, H., van~de Meent, J.W.: Topological
  obstructions and how to avoid them. Advances in Neural Information Processing
  Systems  \textbf{36} (2024)

\bibitem{falorsi2018explorations}
Falorsi, L., De~Haan, P., Davidson, T.R., De~Cao, N., Weiler, M., Forr{\'e},
  P., Cohen, T.S.: Explorations in homeomorphic variational auto-encoding.
  ICML18 Workshop on Theoretical Foundations and Applications of Deep
  Generative Models  (2018)

\bibitem{de2018topological}
de~Haan, P., Falorsi, L.: Topological constraints on homeomorphic
  auto-encoding. NeurIPS 2018 workshop on Integration of Deep Learning Theories
   (2018)

\bibitem{harmonicsclaus}
Harmonics, S.: Claus mulier. Lecture Notes in Mathematics (LNM))  \textbf{17}
  (1966)

\bibitem{hatcher2005algebraic}
Hatcher, A.: Algebraic topology. Cambridge University Press (2005)

\bibitem{higgins2018towards}
Higgins, I., Amos, D., Pfau, D., Racaniere, S., Matthey, L., Rezende, D.,
  Lerchner, A.: Towards a definition of disentangled representations. arXiv
  preprint arXiv:1812.02230  (2018)

\bibitem{higgins2017beta}
Higgins, I., Matthey, L., Pal, A., Burgess, C.P., Glorot, X., Botvinick, M.M.,
  Mohamed, S., Lerchner, A.: beta-{VAE}: Learning basic visual concepts with a
  constrained variational framework. ICLR (Poster)  \textbf{3} (2017)

\bibitem{hoffman2016elbo}
Hoffman, M.D., Johnson, M.J.: Elbo surgery: yet another way to carve up the
  variational evidence lower bound. In: Workshop in Advances in Approximate
  Bayesian Inference, NIPS. vol.~1 (2016)

\bibitem{huh2024isometric}
Huh, I., Choe, J.M., KIM, Y., Kim, D., et~al.: Isometric quotient variational
  auto-encoders for structure-preserving representation learning. Advances in
  Neural Information Processing Systems  \textbf{36} (2024)

\bibitem{kaczynski2004computational}
Kaczynski, T., Mischaikow, K.M., Mrozek, M.: Computational homology, vol.~157.
  Springer (2004)

\bibitem{kim2018disentangling}
Kim, H., Mnih, A.: Disentangling by factorising. In: International conference
  on machine learning. pp. 2649--2658. PMLR (2018)

\bibitem{kingma2013auto}
Kingma, D.P., Welling, M.: Auto-encoding {V}ariational {B}ayes. In: Bengio, Y.,
  LeCun, Y. (eds.) 2nd International Conference on Learning Representations,
  {ICLR} 2014, Banff, AB, Canada, April 14-16, 2014, Conference Track
  Proceedings (2014), \url{http://arxiv.org/abs/1312.6114}

\bibitem{klushyn2019learning}
Klushyn, A., Chen, N., Kurle, R., Cseke, B., van~der Smagt, P.: Learning
  hierarchical priors in vaes. Advances in neural information processing
  systems  \textbf{32} (2019)

\bibitem{liu2020metrics}
Liu, X., Thermos, S., Valvano, G., Chartsias, A., O’Neil, A., Tsaftaris,
  S.A.: Metrics for exposing the biases of content-style disentanglement. arXiv
  preprint arXiv:2008.12378  \textbf{7} (2020)

\bibitem{locatello2019challenging}
Locatello, F., Bauer, S., Lucic, M., Raetsch, G., Gelly, S., Sch{\"o}lkopf, B.,
  Bachem, O.: Challenging common assumptions in the unsupervised learning of
  disentangled representations. In: international conference on machine
  learning. pp. 4114--4124. PMLR (2019)

\bibitem{menkovski2024small}
Menkovski, V., Portegies, J.W., Ravelonanosy, M.R.: Small time asymptotics of
  the entropy of the heat kernel on a riemannian manifold. Applied and
  Computational Harmonic Analysis p. 101642 (2024)

\bibitem{milnor1965topology}
Milnor, J.: Topology from the differentiable viewpoint, univ. Press of
  Virginia, Charlottesville  \textbf{1990} (1965)

\bibitem{ijcai2020p375}
Perez~Rey, L.A., Menkovski, V., Portegies, J.: Diffusion variational
  autoencoders. In: Bessiere, C. (ed.) Proceedings of the Twenty-Ninth
  International Joint Conference on Artificial Intelligence, {IJCAI-20}. pp.
  2704--2710. International Joint Conferences on Artificial Intelligence
  Organization (7 2020). \doi{10.24963/ijcai.2020/375},
  \url{https://doi.org/10.24963/ijcai.2020/375}, main track

\bibitem{rezende2014stochastic}
Rezende, D.J., Mohamed, S., Wierstra, D.: Stochastic backpropagation and
  approximate inference in deep generative models. In: International conference
  on machine learning. pp. 1278--1286. PMLR (2014)

\bibitem{sepliarskaia2019not}
Sepliarskaia, A., Kiseleva, J., de~Rijke, M.: How not to measure
  disentanglement. In: ICML Workshop on Theoretic Foundation, Criticism, and
  Application Trend of Explainable AI (July 2021)

\bibitem{sonderby2016ladder}
S{\o}nderby, C.K., Raiko, T., Maal{\o}e, L., S{\o}nderby, S.K., Winther, O.:
  Ladder variational autoencoders. Advances in neural information processing
  systems  \textbf{29} (2016)

\bibitem{suter2019robustly}
Suter, R., Miladinovic, D., Sch{\"o}lkopf, B., Bauer, S.: Robustly disentangled
  causal mechanisms: Validating deep representations for interventional
  robustness. In: International Conference on Machine Learning. pp. 6056--6065.
  PMLR (2019)

\bibitem{szegedy2013intriguing}
Szegedy, C., Zaremba, W., Sutskever, I., Bruna, J., Erhan, D., Goodfellow,
  I.J., Fergus, R.: Intriguing properties of neural networks. In: Bengio, Y.,
  LeCun, Y. (eds.) 2nd International Conference on Learning Representations,
  {ICLR} 2014, Banff, AB, Canada, April 14-16, 2014, Conference Track
  Proceedings (2014)

\bibitem{tomczak2018vae}
Tomczak, J., Welling, M.: Vae with a vampprior. In: International conference on
  artificial intelligence and statistics. pp. 1214--1223. PMLR (2018)

\bibitem{tonnaer2022quantifying}
Tonnaer, L., Rey, L.A.P., Menkovski, V., Holenderski, M., Portegies, J.:
  Quantifying and learning linear symmetry-based disentanglement. In:
  International Conference on Machine Learning. pp. 21584--21608. PMLR (2022)

\bibitem{van2019disentangled}
Van~Steenkiste, S., Locatello, F., Schmidhuber, J., Bachem, O.: Are
  disentangled representations helpful for abstract visual reasoning? Advances
  in neural information processing systems  \textbf{32} (2019)

\end{thebibliography}
\nocite{*}

\end{document}